\newtheorem{theorem}{Theorem}
\newtheorem{prop}{Proposition}
\newtheorem{dfn}{Definition}
\newtheorem{lem}{Lemma}
\newtheorem{bib_rem}{Bibliographic remark}
 \newcommand{\norm}[1] {\left| \left| #1  \right| \right|}
 \newcommand{\RR} {\mathbb{R}}
\title{Generalized mean shift \\ with triangular kernel profile}
\author{S. Razakarivony, A. Barrau 
\thanks{S. Razakarivony and A. Barrau are with  SAFRAN TECH, Groupe Safran, Rue des Jeunes
Bois - Ch\^ateaufort, 78772 Magny Les Hameaux CEDEX, France {\tt\small sebastien.razakarivony@safrangroup.com axel.barrau@safrangroup.com}. A. Barrau is associate researcher at MINES
ParisTech, PSL Research University, Centre for
robotics, 60 Bd St Michel 75006 Paris, France {\tt\small axel.barrau@mines-paristech.fr}}. }
\begin{document}

\maketitle

\begin{abstract}

The mean shift algorithm is a popular way to find modes of some probability density functions taking a specific kernel-based shape, used for clustering or visual tracking. Since its introduction, it underwent several practical improvements and generalizations, as well as deep theoretical analysis mainly focused on its convergence properties. In spite of encouraging results, this question has not received a clear general answer yet. In this paper we focus on a specific class of kernels, adapted in particular to the distributions clustering applications which motivated this work. We show that a novel Mean Shift variant adapted to them can be derived, and proved to converge after a finite number of iterations. In order to situate this new class of methods in the general picture of the Mean Shift theory, we alo give a synthetic exposure of existing results of this field.

%In spite of encouraging results, this question is not solved for all kernels yet. Moreover, classical mean shift relies on the Euclidean norm and little work has been carried out on extending the algorithm to other metrics better suited to specific applications. This kind of generalizations is made possible by recent research casting the mean shift algorithm into the framework of bound optimization. Our contributions is threefold: first, we make a small review of mean shift theory. Second, we extend the convergence results to allow new variants of mean shift. Third, we introduce two applications of our convergence result : median shift and wasserstein median shift, two new variants of mean shift using respectively L1 and Wasserstein distance.

\end{abstract}

\section{Introduction}

The mean shift algorithm is a simple iterative algorithm introduced in \citep{fukunaga1975estimation}, which became over years a cornerstone of data clustering.
Technically, its purpose is to find local maxima of a function having the following shape:
\begin{equation}
\label{eq::ker_density}
f(x)=\frac{1}{Nh^q} \sum_{i}k \left(\frac{\norm{x-x_{i}}^2}{h} \right),
\end{equation}
where $(x_i)_{1 \leq i \leq N}$ is a set of data points belonging to a vector space $\RR^q$, $h$ is a scale factor, $k(.)$  is a convex and decreasing function from $\RR_{\geq 0}$ to $\RR_{\geq 0}$ and $\norm{\cdot}$ denotes the Euclidean norm in $\RR^q$.
This optimization is done by means of a sequence $n \rightarrow \hat{x}_n$, which we will explicite in Sect. \ref{sect::Mean_Shift_theory}.
(To alleviate notations, the factor $ \frac{1}{Nh^q} $ in Eq. \eqref{eq::ker_density} will be always omitted in this paper). Usually, functions such as $f(.)$ of Eq. \eqref{eq::ker_density} are built to approximate an underlying probability density function (p.d.f.) from which the vectors $(x_i)_{1 \leq i \leq N}$ are assumed to be sampled. Following the terminology used in the kernel density estimation theory, $K(x,y) = k\left(\frac{\norm{x-y}^2}{h} \right)$ is the kernel while $k(.)$ is refered to as the \emph{profile} of this kernel.

As a mode seeking procedure, Mean Shift can be used for clustering: \emph{high density} areas surrounding the modes of $f(\cdot)$ can be regarded as clusters of the point cloud $(x_i)_{1 \leq i \leq N}.$ The advantage of this approach, and more generally of density-based methods lies in the reduced tuning work they require compared to other classical clustering algorithms. In partitioning-based clustering for example, to which belong the very popular K-means algorithm and its variations \citep{KMEANS,KMEANS_V1,KMEANS_V2,KMEANS_V3}, the number of clusters has to be set by the user. Hierarchical methods (such as OPTICS \citep{OPTICS} or CURE \citep{CURE}) return a dendrogram conveying the structure of the data set, not directly turnkey clusters. Model-based methods need even more input from the user as they assume the data follow some specific model characterized by a set of parameters to be estimated. Some well known examples of such approaches are Expectation Maximization (EM) \citep{EM} or Self-Organizing Map \citep{SOM}. To come back to density-based methods, they do resort to prior knowledge on the structure of data through a choice of kernel metric. In particular, setting a proper scaling requires having an idea of the order of magnitude of the distance between points inside a cluster. The other way to introduce human expertise into Eq. \eqref{eq::ker_density} is using a meaningful distance with respect to the nature of the data.

If we can notice some works that extend Mean Shift to manifold space using kernels, as in \citep{vedaldi2008quick} or \citep{cetingul2009intrinsic}, this point is usually not addressed directly in the literature, as the distance used has to be Euclidean (possibly up to a variable change). However, the reformulation of Mean Shift as a bound optimization, proposed in \citep{fashing2005mean}, suggests an interesting generalization to any (possibly non-Euclidean) distance. 
Note that convergence Mean Shift for a general kernel, even based on the Euclidean distance, is still an open problem as the proof given in \citep{MEANSHIFT} has been contested in \citep{li2007note} and in \citep{ghassabeh2015sufficient}. The closest result to the one we will show is, \citep{huang2018convergence} which focuses on the more specific case of Epanechnikov mean shift. In the present paper, we clarify what is meant by ``convergence" claimed by the different articles and what results have been proved, and situate our contribution in this general picture. Our contribution is threefold:
\begin{enumerate}
\item We make a synthetic review of mean shift theory.
\item We prove a new convergence result for the case of triangular kernel profiles, an immediate application of which is showing convergence of the Median Shift algorithm proposed in \citep{shapira2009mode}.
\item We propose a novel convergent mode seeking algorithm we call \emph{Wasserstein Median Shift}, adapted to clustering of histogram representing distributions and which was the initial motivation for the present work.
\end{enumerate}

The remaining of the article is organized as follows. In Section \ref{sect::Mean_Shift_theory} we review the theory of mean shift, with a small focus on the generalization suggested in \citep{fashing2005mean}. In Section \ref{sect::flat}, the core of our contribution, we study the specific case of the triangular profile and prove convergence of the returned sequence $n \rightarrow \hat{x}_n$, apply it to Median Shift and introduce the Wasserstein Median Shift algorithm. Finally, in Section \ref{sect::Results} we illustrate the theoretical results of the article on a synthetic and a real dataset.

\section{An overview of the mean shift theory}
\label{sect::Mean_Shift_theory}

This section summarizes the main results of the theory of mean shift. In \ref{sect::MS_interpretations}, the equations of the algorithm are given along with their different interpretations. In \ref{sect::generalized_MS}, the equations of a generalized mean shift suggested in \citep{fashing2005mean}, applicable to non-euclidean distances, are derived. Finally, the main convergence results of the mean shift theory are listed in \ref{sect::CV_MS}.

\subsection{The mean shift algorithm}
\label{sect::MS_interpretations}

As explained in the introduction, the mean shift algorithm's purpose is to find local maxima of the function defined by Eq.\eqref{eq::ker_density}.
To do so,  given a point $x$ of the data space, the algorithm builds a sequence $\hat{x}_0, \hat{x}_1, \hat{x}_2, \dots$ starting at $\hat{x}_0=x$ that converges to a local maximum of the density function. This is achieved iterating the following step: 
\begin{equation}
\label{eq::Mean_Shift}
\hat{x}_{n+1}=\frac{\sum_{i}{\alpha_i x_{i}}}{{\sum_{i=1}^N{\alpha_i}}},
\end{equation}
with $\alpha_i = g(||\frac{\hat{x}_n-x_{i}}{h}||^2)$
where $g(\cdot)=-k'(\cdot)$ is the opposite of the derivative of $k(\cdot)$. This process, the convergence of which to a mode of $f(.)$ will be discussed in Section \ref{sect::CV_MS}, can be given three interpretations: as a weight average, as a gradient ascend and as a bound optimization.

\paragraph{Mean shift as a weighted average}

Each mode estimate $\hat{x}_n$ is a weighted average of the elements of the data set, with $\alpha_i$ the weight assigned to the element $x_i$. The profile $k(.)$ being decreasing and convex, the function $g(.)=-k'(.)$ appearing in the definition of the weights $\alpha_i$ is positive and decreasing: the further a point is from the current average, the lower its weight will be in the computation of the next average. Although insightful, this interpretation does not really help understanding why this process of local averaging would converge to a mode of the function defined by Eq. \eqref{eq::ker_density}.

\paragraph{Mean shift as a gradient ascend}

Another interpretation is based on the remark that Eq. \eqref{eq::Mean_Shift} can be re-written as:
$$
\hat{x}_{n+1} = \hat{x}_n + \lambda(\hat{x}_n) \nabla f|_{\hat{x}_n},
$$
with $\lambda(\hat{x}_n)= 1/\sum_{i=1}^N \alpha_i$ and $\nabla f|_{x} = -\sum_i \alpha_i (x-x_i)$ is the gradient of $f(.)$ (checking this result is a direct computation).
From this point of view mean shift is a gradient ascend with a step size $\lambda(\hat{x}_n)$ updated at each iteration. 

\paragraph{Mean shift as a bound optimization}
The third interpretation, proposed in \citep{fashing2005mean}, assimilates mean shift to a bound optimization algorithm.
The term \emph{bound optimization} denotes a wide range of algorithms consisting in replacing a function $f(\cdot)$ to be optimized with an approximation $\hat{f}_n(\cdot)$, having properties ensuring that optimizing $\hat{f}_n(\cdot)$ will increase the value of $f(\cdot)$ \citep{hunter2004tutorial}. Once the optimum of $\hat{f}_n(\cdot)$ has been found, it is used to build a new approximation $\hat{f}_{n+1}(\cdot)$ and the operation is iterated until convergence of the sequence. 
Seeing mean shift from this viewpoint, we can write Eq. \eqref{eq::Mean_Shift} as:
\begin{equation}
\label{eq::bound}
\hat{x}_{n+1} = \text{argmax} \hat{f}_n,
\end{equation}
where the function $\hat{f}_n$ is defined as
\begin{equation}
\label{eq::def_fn}
\hat{f}_n(x) = f(\hat{x}_n) -  \sum_i g \left( \left|\left| \frac{\hat{x}_n-x_i}{h} \right|\right|^2 \right) \left( \left| \left| \frac{x-x_i}{h} \right | \right|^2 - \left| \left| \frac{\hat{x}_n-x_i}{h} \right | \right|^2 \right).
\end{equation}
%This reformulation is in the spirit of \citep{fashing2005mean}, but is not directly taken from it.
It can be easily checked that zeroing the gradient of $\hat{f}_n(\cdot)$ in Eq. \eqref{eq::def_fn} ends in Eq. \eqref{eq::Mean_Shift}, i.e., the classical formulation of mean shift. This point of view allowed the authors of \citep{fashing2005mean} to revisit the main result of the mean shift theory (growth of the sequence $n \rightarrow f(\hat{x}_n)$) in an especially insightful way. As noticed in this reference, the approach can be generalized to build a mean shift variant based on any (possibly non-Euclidean) distance. To our best knowledge the authors never wrote down nor used the algorithms obtained that way but their derivation, to which Sect. \ref{sect::generalized_MS} below is dedicated, is a necessary step to the new results we will prove in Sect. \ref{sect::flat}.

\subsection{Mean shift based on a general distance}
\label{sect::generalized_MS}

\begin{bib_rem}
The ideas of this section are mentioned in the Discussion and Conclusions section of \citep{fashing2005mean}. What follows was (to our best knowledge) never written in black and white, and their suggestion is not clear wether it is the Euclidean norm or the squared Euclidean norm that could easily be changed. However, what follows should be considered in all honesty a part of existing theory and a starting point for the results of Section \ref{sect::flat}, not a contribution of the present paper.
\end{bib_rem}

A benefit of the reformulation of mean shift as a bound optimization appears if we replace the squared Euclidean norm in Eq. \eqref{eq::ker_density} with a general function $d(\cdot,\cdot)$ on the state space $\mathbb{R}^q$ going to $+\infty$ when the norm of its first argument goes to infinity. The counterpart of Eq. \eqref{eq::ker_density} is then a p.d.f. of the form (omitting the factor $\frac{1}{Nh^q}$):
\begin{equation}
\label{eq::f_general}
f(x)= \sum_{i} k \left( d \left( x,x_i \right) \right).
\end{equation}
There is \emph{a priori} no obvious transposition of Eq. \eqref{eq::Mean_Shift} in this case. Conversely, adapting the bound optimization equations \eqref{eq::bound}, \eqref{eq::def_fn} is straightforward and leads to the following iterative process:
\begin{equation}
\label{eq::iterate2}
\hat{x}_{n+1} \in \text{argmax} \hat{f}_n,
\end{equation}
with $\hat{f}_n$ defined as:
\begin{equation}
\label{eq::def_fn2}
\hat{f}_n(x) = f(\hat{x}_n) - \sum_i g \left( d \left( \hat{x}_n-x_i \right) \right) \Big[ d \left( x,x_i \right) - d \left( \hat{x}_n,x_i \right) \Big].
\end{equation}
Note that taking the minimum of $-\hat{f}_n$ in \eqref{eq::iterate2} (instead of the maximum of $\hat{f}_n$) and removing the constant terms in \eqref{eq::def_fn2}, does not change the position of the optimum and leads to a simpler formulation of the  same algorithm:
\begin{equation}
\label{eq::iterate3}
\hat{x}_{n+1} \in \underset{x}{\text{argmin}} \sum_i \alpha_i d \left( x,x_i \right), \quad \text{with} \quad \alpha_i = g( d \left( \hat{x}_n-x_i \right)) .
\end{equation}

This generalization will be leveraged in Section \ref{sect::WMS} to build new versions of mean shift, better suited in particular to clustering histograms representing distributions and having convergence guarantees.
For now, we can notice that classical properties of bound optimization hold, in particular Proposition \ref{prop::growth_generalized} below. 

\begin{prop}
\label{prop::growth_generalized}
The sequence $n \rightarrow f(\hat{x}_n)$, with $\hat{x}_n$ verifying \eqref{eq::iterate3}, is growing. Moreover, as it is also upper bounded, it is convergent.
\end{prop}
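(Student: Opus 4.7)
The plan is a standard bound optimization argument, reduced to two ingredients: (i) $\hat{f}_n$ is a minorant of $f$ that touches $f$ at $\hat{x}_n$, and (ii) $\hat{x}_{n+1}$ is by definition a maximizer of $\hat{f}_n$. Composing these two facts yields $f(\hat{x}_{n+1}) \geq \hat{f}_n(\hat{x}_{n+1}) \geq \hat{f}_n(\hat{x}_n) = f(\hat{x}_n)$, which is exactly monotonicity.

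First I would establish the minorization. Since $k$ is convex and differentiable with $g=-k'$, the tangent inequality gives, for every $i$ and every $x$,
\[
k\bigl( d(x,x_i)\bigr) \;\geq\; k\bigl( d(\hat{x}_n,x_i)\bigr) \;-\; g\bigl( d(\hat{x}_n,x_i)\bigr)\bigl[\, d(x,x_i) - d(\hat{x}_n,x_i)\,\bigr].
\]
Summing over $i$ and recognizing the right-hand side as $\hat{f}_n(x)$ from \eqref{eq::def_fn2}, one gets $f(x)\geq \hat{f}_n(x)$ for all $x$, with equality at $x=\hat{x}_n$ (the bracketed increments all vanish).

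Next, I would use \eqref{eq::iterate2}: since $\hat{x}_{n+1} \in \mathrm{argmax}\,\hat{f}_n$, in particular $\hat{f}_n(\hat{x}_{n+1}) \geq \hat{f}_n(\hat{x}_n)$. Chaining with the minorization gives
\[
f(\hat{x}_{n+1}) \;\geq\; \hat{f}_n(\hat{x}_{n+1}) \;\geq\; \hat{f}_n(\hat{x}_n) \;=\; f(\hat{x}_n),
\]
which proves the sequence $n\mapsto f(\hat{x}_n)$ is non-decreasing.

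For the upper bound, I would invoke that $k\colon\RR_{\geq 0}\to\RR_{\geq 0}$ is decreasing, so $k(d(x,x_i))\leq k(0)$ for every $x$ and $i$, giving $f(x)\leq N\,k(0)$ on the whole state space. A bounded monotone real sequence converges, so $n\mapsto f(\hat{x}_n)$ converges. I do not expect any serious obstacle: the only point deserving care is the existence of $\hat{x}_{n+1}$ implicit in \eqref{eq::iterate3} (which is guaranteed at this level of generality by the coercivity assumption that $d(\cdot,x_i)\to+\infty$ at infinity together with positivity of the weights $\alpha_i=g(d(\hat{x}_n,x_i))$, but since the statement uses the set-valued $\in$ and the growth argument only needs \emph{some} maximizer, this is not actually a hypothesis we have to discharge here).
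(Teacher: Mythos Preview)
Your argument is correct and follows exactly the paper's approach: the paper packages your minorization, touching, and argmax facts into a separate lemma (its Lemma~\ref{prop::f_hat_n}) and then concludes with the same chain $f(\hat{x}_{n+1}) \geq \hat{f}_n(\hat{x}_{n+1}) \geq \hat{f}_n(\hat{x}_n) = f(\hat{x}_n)$. Your proof is in fact slightly more explicit, since you spell out the upper bound $f(x)\le N\,k(0)$, which the paper leaves implicit.
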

\begin{proof}
Using Lemma \ref{prop::f_hat_n} (see below) we have: $f(\hat{x}_{n+1}) \geq \hat{f}_n(\hat{x}_{n+1}) \geq \hat{f}_n(\hat{x}_n) = f(\hat{x}_n)$.
\end{proof}
The proof of Proposition \ref{prop::growth_generalized} used the following lemma:
\begin{lem}
\label{prop::f_hat_n}
The functions $f(\cdot)$ of Eq. \eqref{eq::f_general} and  $\hat{f}_n(\cdot)$ of Eq. \eqref{eq::def_fn2} verify the following properties:
\begin{enumerate}[(i)]
\item $\hat{f}_n(\hat{x}_n) = f(\hat{x}_n)$ \label{prop::f_hat_n_1}
\item $\hat{f}_n (\hat{x}_n) \leq \hat{f}_n (\hat{x}_{n+1})  $ \label{prop::f_hat_n_2}
\item $\forall x, \hat{f}_n(x) \leq  f(x) $ \label{prop::f_hat_n_3}
\end{enumerate}
\end{lem}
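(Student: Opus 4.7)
The plan is to prove the three claims in order; none is very deep individually, but (iii) is where the convexity hypothesis on the profile $k$ does its work, and it is the conceptual pivot of the whole bound-optimization viewpoint.

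For (i), I would simply evaluate $\hat{f}_n$ at $x=\hat{x}_n$ in Eq.~\eqref{eq::def_fn2}. Each bracket $\bigl[d(\hat{x}_n,x_i)-d(\hat{x}_n,x_i)\bigr]$ collapses to zero, so the whole sum vanishes and $\hat{f}_n(\hat{x}_n)=f(\hat{x}_n)$. This is purely algebraic, no hypothesis on $k$ or $d$ required.

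For (ii), I would invoke the definition of $\hat{x}_{n+1}$ given in Eq.~\eqref{eq::iterate2} (equivalently Eq.~\eqref{eq::iterate3}): $\hat{x}_{n+1}\in\mathrm{argmax}\,\hat{f}_n$. Thus $\hat{f}_n(\hat{x}_{n+1})\geq \hat{f}_n(y)$ for every $y$, and in particular for $y=\hat{x}_n$. One small caveat is that we are implicitly assuming an argmax exists; in the general non-Euclidean setting this is a genuine hypothesis, but since Eq.~\eqref{eq::iterate3} uses ``$\in$'' the statement is meaningful only when such an $\hat{x}_{n+1}$ has been chosen, so no further justification is needed here.

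The main obstacle is (iii), which is where convexity of $k$ enters. The plan is to use the standard supporting-line inequality for a convex differentiable function: for all real $u,v\geq 0$,
\begin{equation*}
k(u)\;\geq\;k(v)+k'(v)(u-v)\;=\;k(v)-g(v)(u-v),
\end{equation*}
since $g=-k'$. I would then apply this pointwise with $u=d(x,x_i)$ and $v=d(\hat{x}_n,x_i)$ for each $i$, which gives
\begin{equation*}
k\bigl(d(x,x_i)\bigr)\;\geq\;k\bigl(d(\hat{x}_n,x_i)\bigr)-g\bigl(d(\hat{x}_n,x_i)\bigr)\bigl[d(x,x_i)-d(\hat{x}_n,x_i)\bigr].
\end{equation*}
Summing over $i$, the left-hand side is $f(x)$ by Eq.~\eqref{eq::f_general}, and the right-hand side is exactly $f(\hat{x}_n)$ minus the correction term in Eq.~\eqref{eq::def_fn2}, i.e.\ $\hat{f}_n(x)$. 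Hence $f(x)\geq \hat{f}_n(x)$ for every $x$, which is (iii). Note that this step uses only convexity of $k$ and makes no assumption whatsoever on $d$, which is precisely what makes the bound-optimization reformulation well suited to the generalization to arbitrary distances pursued in Section~\ref{sect::flat}.
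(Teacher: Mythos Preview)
Your proof is correct and follows essentially the same route as the paper: (i) by direct evaluation, (ii) from the definition of $\hat{x}_{n+1}$ as an argmax, and (iii) from the supporting-line inequality for the convex profile $k$, applied with $u=d(x,x_i)$ and $v=d(\hat{x}_n,x_i)$ and then summed over $i$. Your added remarks on the existence of the argmax and on the fact that (iii) uses no assumption on $d$ are accurate and helpful.
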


\begin{proof}
\eqref{prop::f_hat_n_1} is immediate using definition \ref{eq::def_fn2}.
\eqref{prop::f_hat_n_2} is a straightforward consequence of \eqref{eq::iterate2}.
\eqref{prop::f_hat_n_3} can be obtained as follows. The function $ k(.) $ being convex we have $k(u) \geq k(u_0)+k'(u_0)\cdot (u-u_0)$ for any $u \geq 0$. Replacing $u_0$ with $ \frac{d(\hat{x}_n,x_i)}{h}$ and $u$ with $ \frac{d(x-x_i)}{h} $ in the latter expression then summing over $i$ we get for any $x$:
\begin{align*}
f(x) & \geq \sum_i k \left( \frac{d(\hat{x}_n,x_i)}{h} \right)\\
& +  k' \left( \frac{d(\hat{x}_n,x_i)}{h} \right) \left( \frac{d(x,x_i)}{h} - \frac{d(\hat{x}_n,x_i)}{h} \right) \\
 & \geq f(\hat{x}_n) -   \sum_i^N g \left( \frac{d(\hat{x}_n,x_i)}{h} \right) \left( \frac{d(x,x_i)}{h} - \frac{d(\hat{x}_n,x_i)}{h} \right).
\end{align*}
The last expression being exactly the function $\hat{f}_n(.)$ defined by Eq. \eqref{eq::def_fn} we obtain \eqref{prop::f_hat_n_3}.
%Regarding, \ref{prop::f_hat_n_4}, $\hat{f}_n(\cdot)$ feing defined in \eqref{eq::def_fn} as a linear combination of concavefunctions it is itself a concave function. Finally, \ref{prop::f_hat_n_5} is a straightforward computation.
\end{proof}

Note that Prop. \ref{prop::growth_generalized} does not imply convergence of the mode estimates $\hat{x}_n$, which has to be studied on a case-by-case basis. The existing convergence results for main variants of the Mean Shift algorithm will be summed up in Sect. \ref{sect::CV_MS}.

\subsection{Convergence of mean shift algorithms}
\label{sect::CV_MS}

In the previous subsection we generalized the main convergence result of the theory of mean shift (Prop. \ref{prop::growth_generalized}), verified by all variants of the method. It concerns $f(\hat{x}_n)$, not the mode estimate $\hat{x}_n$ itself. In this subsection, we give a quick overview of the results obtained in the litteratture on the convergence properties of mean shift algorithms. We propose to classify these methods depending on their convergence guarantees. Three guarantee levels can usually be shown, from the lower to the higher:
\begin{enumerate}
\item The sequence $n \rightarrow f(\hat{x}_n)$ is non-decreasing \label{prop::non-decreasing}
\item The sequence $n \rightarrow \hat{x}_n$ converges to a stationary point of $f$
\item The sequence $n \rightarrow \hat{x}_n$ converges to a local maximum of $f$
\end{enumerate}
We have of course $3 \Rightarrow 2 \Rightarrow 1$. $1$ is the minimal result of a mean-shift-based algorithm. $2$ ensures we can use the algorithm as a clustering method: once we know the process is convergent we can define a cluster as a set of initializations $x$ bringing the sequence $\hat{x}_n$ to the same limit value. $3$ has been shown for Euclidean norm and Gaussian kernel in \citep{ghassabeh2015sufficient} and for Euclidean norm and Epanchenikov kernel in \citep{huang2018convergence}. The current state of the theory of mean shift can be summed up by the following table, where the numbers 1, 2, 3 represent what result have been shown for each situation. Our contribution is to prove $2$ for flat $g(.)$ for any distance (circled 2 in the table).

\begin{center}
\begin{tabular}{|c|c|c|c|}
\hline
\backslashbox{$d(\cdot,\cdot)$}{$g(\cdot)$}     &   exponential  &         flat                & general smooth \\
\hline
Squared Euclidean &   1, 2, 3     &            1, 2, 3                  &     1, 2, 3      \\
\hline
General   &     1       &   1, \large{\textcircled{\small{2}}} &       1 \\
\hline
\end{tabular}

\end{center}

\section{Mean shift with triangular profile}
\label{sect::flat}

This section contains the theoretical contributions of the present paper. The main result, established in Section \ref{sect::Main}, is the convergence of the generalized mean shift of Sect. \ref{sect::generalized_MS} when the weighting function $g(.)$ is flat (i.e. the profile $k(.)$ is triangular). Actually, the sequence $n \rightarrow \hat{x}_n$ will even be shown to be stationary. In Sect. \ref{sect::Mean_Shift_L1} and Sect. \ref{sect::WMS} two clustering algorithms are derived from this result: Median Shift (already proposed in \citep{shapira2009mode} and now given a convergence proof) and Wasserstein median shift. The latter, which is the initial motivation for the work presented in this paper, is particularly well suited to histogram clustering.

\subsection{Main result}
\label{sect::Main}

In the present section, we derive the generalized mean shift equations in the case where the profile $k(.)$ is triangular, and prove they generate a convergent sequence. We have here:
%$$
%k(u) = \left \{ \begin{matrix} 1-u & \text{if} \quad u < 1 \\
% 0 & \text{if} \quad u \geqslant 1
%\end{matrix} \right.
%$$
%$$
%g(u) = \left \{ \begin{matrix} 1 & \text{if} \quad u < 1 \\
% 0 & \text{if} \quad u \geqslant 1
%\end{matrix} \right.
%$$

\begin{equation}
k(u) = \left \{ \begin{matrix} 1-u & \text{if} \quad u < 1 \\
0 & \text{if} \quad u \geqslant 1
\end{matrix} \right.
\quad \text{and} \quad
g(u) = \left \{ \begin{matrix} 1 & \text{if} \quad u < 1 \\
 0 & \text{if} \quad u \geqslant 1
\end{matrix} \right.
\end{equation}

Applying formula \eqref{eq::iterate3} we obtain the following iteration:
\begin{equation}
\label{eq::mu}
\hat{x}_{n+1} = \mu(C_n) \in \underset{x}{\text{argmin}} \sum_{i \in C_n} d(x,x_i)
\end{equation}
where $\mu(\cdot)$ is a choice function that returns always the same $x$ for a given $C_n$, and $C_n = \{ i, | d(x_n, x_i) < 1 |\}$. 
For a given $n$, we call $\{x_i, i \in C_n\}$ the active set.
The only benefit in introducing $\mu$ in \eqref{eq::mu} instead of writing $\hat{x}_{n+1} = \underset{x}{\text{argmin}} \sum_{C_n} d(x, x_i)$ is to ensure that taking twice the same set $C_n$ will give the same $\hat{x}_{n+1}$ in case several minimizers exist.
Now we can come to the main result of this section:

\begin{theorem}
\label{thm::main}
The sequence $\hat{x}_n$ defined by see Eq. \eqref{eq::mu} (i.e. generalized mean shift in the case of flat weights), is stationnary regardless of the distance used.
It converges in a finite number of step.
\end{theorem}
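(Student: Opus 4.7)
The plan is to combine a pigeonhole argument on the active sets with the monotonicity of $n \mapsto f(\hat{x}_n)$ established in Proposition \ref{prop::growth_generalized}.

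First I would observe that the active set $C_n$ lives in the finite family of subsets of $\{1,\ldots,N\}$ and that the choice function $\mu$ makes $\hat{x}_{n+1}$ depend on $C_n$ alone. Hence if $C_n = C_m$ for some $n < m$, then $\hat{x}_{n+1} = \hat{x}_{m+1}$, whence $C_{n+1} = C_{m+1}$, and an immediate induction shows the two tails coincide. So $(C_n)$, and therefore $(\hat{x}_n)$, is eventually periodic, entering a cycle of some period $p \geq 1$ after at most $2^N$ steps.

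Second I would close the loop by forcing $p = 1$. On the cycle, periodicity together with the monotonicity of $n \mapsto f(\hat{x}_n)$ makes $f$ constant, so every inequality in the chain $f(\hat{x}_{n+1}) \geq \hat{f}_n(\hat{x}_{n+1}) \geq \hat{f}_n(\hat{x}_n) = f(\hat{x}_n)$ provided by Lemma \ref{prop::f_hat_n} must be an equality. Because $g$ is flat, a direct computation gives $\hat{f}_n(x) = \sum_{i \in C_n}(1 - d(x,x_i))$, while $f(x) = \sum_{i : d(x,x_i) < 1}(1 - d(x,x_i))$. At $x = \hat{x}_{n+1}$ the difference $f - \hat{f}_n$ splits into a sum over $C_{n+1} \setminus C_n$ of strictly positive terms (since $d(\hat{x}_{n+1},x_i) < 1$ there) minus a sum over $C_n \setminus C_{n+1}$ of non-positive ones (since $d(\hat{x}_{n+1},x_i) \geq 1$ there). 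Equality therefore forces $C_{n+1} \setminus C_n = \emptyset$, that is $C_{n+1} \subseteq C_n$. Chaining this inclusion around the cycle gives $C_n \supseteq C_{n+1} \supseteq \cdots \supseteq C_{n+p} = C_n$, which collapses everything to equalities, so $C_n$ is constant on the cycle. Since $\hat{x}_{n+1} = \mu(C_n)$, the sequence $(\hat{x}_n)$ is itself constant there, which is the desired stationarity after finitely many steps.

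The main obstacle I expect is the third step: turning the equality of function values into an inclusion of active sets rather than a mere statement that $\hat{x}_n$ and $\hat{x}_{n+1}$ are both minimizers of $\sum_{i \in C_n} d(\cdot, x_i)$. The latter alone would not conclude, because with a general distance $d$ the choice function $\mu$ might well pick a minimizer different from $\hat{x}_n$, and no regularity of $d$ is available to rule out this tie. The saving feature is the discrete jump of the triangular profile at $d = 1$: entering or leaving the active set produces a strict contribution of prescribed sign, and this rigidity is what forces the finite-state loop to collapse to a fixed point.
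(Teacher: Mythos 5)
Your proof is correct, and its mathematical core coincides with the paper's: the splitting of $f(\hat{x}_{n+1}) - \hat{f}_n(\hat{x}_{n+1})$ into a strictly positive contribution from $C_{n+1}\setminus C_n$ and a non-negative one from $C_n\setminus C_{n+1}$ is exactly the decomposition underlying Proposition \ref{prop::strictly} (your two sums are the paper's terms $\boxed{4}$ and $\boxed{2}$), and it plays the same role: any addition to the active set forces a strict increase of $f$, so equality of successive values forbids additions. The only genuine difference is the finite-state endgame. The paper first observes that $\hat{x}_n$ ranges over the finite set of ``candidates'' $\mu(J)$, deduces that the non-decreasing sequence $f(\hat{x}_n)$ is stationary, and then, once additions are excluded, uses the fact that the cardinality of $C_n$ is a non-increasing non-negative integer to conclude that $C_n$, hence $\hat{x}_{n+1}=\mu(C_n)$, freezes. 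You instead extract an eventually periodic orbit of $(C_n)$ by pigeonhole and the determinism of $\mu$, and collapse the cycle by chaining the inclusions $C_n\supseteq C_{n+1}\supseteq\cdots\supseteq C_{n+p}=C_n$. Both routes are valid and of comparable length; yours makes explicit use of the determinism of $\mu$ (the very reason it is introduced in Eq.~\eqref{eq::mu}) and avoids introducing the candidate set, while the paper's cardinality argument gives a slightly more direct picture of why the process terminates. Your closing remark correctly identifies the crux: mere equality of minimizer values would not suffice with a general $d$, and it is the strict sign of the terms indexed by $C_{n+1}\setminus C_n$ that rigidifies the dynamics.
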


The rest of this section will be dedicated to proving this result. Let us start with ``candidates'':

\begin{dfn}[Candidates]
We define as ``candidates'' all points $x$ verifying $x = \mu(J)$ for $J$ a subset of indices. As there is a finite number of subsets $J$ in the dataset, there is a finite number of candidates.
\end{dfn}

Obviously, each $\hat{x}_n$ is a candidate, which means $\hat{x}_n$ evolves inside a finite set, as well as $f(\hat{x}_n)$, its image through $f$. Using Prop. \ref{prop::growth_generalized} we also know that $f(\hat{x}_n)$ is growing, which implies:
\begin{prop}
\label{prop::f_stat}
The sequence $n \rightarrow f(\hat{x}_n)$ is stationnary.
\end{prop}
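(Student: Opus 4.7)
The plan is to combine two facts already in hand: the monotonicity of $n \mapsto f(\hat{x}_n)$ given by Proposition \ref{prop::growth_generalized}, and the finiteness of the set in which $\hat{x}_n$ lives. The argument is short.

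First I would observe that every iterate $\hat{x}_n$ is a candidate in the sense of the definition above: by construction $\hat{x}_{n+1} = \mu(C_n)$ for the subset $C_n \subseteq \{1,\dots,N\}$, and the initial point can also be considered as a candidate (or, alternatively, one starts the reasoning at index $n=1$, since only the tail behaviour matters). Because there are only $2^N$ subsets of indices, and $\mu$ is a fixed choice function (so a given subset always produces the same point), the set of candidates is finite. Consequently the image sequence $f(\hat{x}_n)$ takes only finitely many distinct real values.

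Next I would invoke Proposition \ref{prop::growth_generalized}: the sequence $n \mapsto f(\hat{x}_n)$ is non-decreasing. A non-decreasing sequence of real numbers taking values in a finite set must be eventually constant: indeed, let $M$ be the maximum of the (finite) set of values actually attained by the sequence, reached for the first time at some index $n_0$; then for every $n \geq n_0$ one has $f(\hat{x}_n) \geq f(\hat{x}_{n_0}) = M$, while $f(\hat{x}_n) \leq M$ by definition of $M$, so $f(\hat{x}_n) = M$ for all $n \geq n_0$. This is exactly the required stationarity.

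There is no real obstacle here: the statement is a clean corollary of finiteness plus monotonicity. The only subtle point to guard against is confusing stationarity of $f(\hat{x}_n)$ (which is what is claimed now) with stationarity of $\hat{x}_n$ itself (which is the stronger statement of Theorem \ref{thm::main}); the latter will require ruling out cycles between distinct candidates sharing the same value of $f$, and is presumably addressed in the remainder of the section.
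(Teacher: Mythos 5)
Your argument is correct and is exactly the one the paper uses: each $\hat{x}_n$ is a candidate, so $f(\hat{x}_n)$ ranges over a finite set, and combined with the monotonicity from Proposition \ref{prop::growth_generalized} this forces eventual constancy. You merely spell out the elementary ``non-decreasing plus finitely many values implies eventually constant'' step in more detail than the paper does.
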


Prop. \ref{prop::f_stat} does not imply that $n \rightarrow \hat{x}_n$ is also stationary, as it could endlessly shift betwen different values having the same image through $f(.)$. We need the following technical proposition:

\begin{prop}
\label{prop::strictly}
At a step $n \geqslant 1$, if a point is added to the active set, then $n \rightarrow f(\hat{x}_n)$ strictly grows even if some other points are removed. More formally, if (at least) one data point index $i \not \in C_{n}$ and $i \in C_{n+1}$ then we have $f(\hat{x}_n) < f(\hat{x}_{n+1})$.
\end{prop}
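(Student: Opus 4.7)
The plan is to exploit the very simple form the bound $\hat{f}_n$ takes for the triangular profile, and to show that the inequality $f(\hat{x}_{n+1}) \geq \hat{f}_n(\hat{x}_{n+1})$ of Lemma \ref{prop::f_hat_n}\eqref{prop::f_hat_n_3} becomes \emph{strict} as soon as a new index enters the active set. Combined with the argmax property $\hat{f}_n(\hat{x}_{n+1}) \geq \hat{f}_n(\hat{x}_n) = f(\hat{x}_n)$ already used in Prop. \ref{prop::growth_generalized}, this will give the required strict growth.

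First, I would substitute the triangular profile into Eq. \eqref{eq::f_general} and Eq. \eqref{eq::def_fn2}. Setting $C(x) = \{ i : d(x,x_i) < 1 \}$, this yields the two explicit expressions
\begin{equation*}
f(x) = \sum_{i \in C(x)} \bigl( 1 - d(x,x_i) \bigr), \qquad \hat{f}_n(x) = \sum_{i \in C_n} \bigl( 1 - d(x,x_i) \bigr),
\end{equation*}
where $C_n = C(\hat{x}_n)$. (One should check that with this formulation maximising $\hat{f}_n$ is equivalent to the argmin in Eq. \eqref{eq::mu}, so that $\hat{x}_{n+1}$ is indeed $\arg\max \hat{f}_n$.) The two expressions coincide at $x = \hat{x}_n$ (this is Lemma \ref{prop::f_hat_n}\eqref{prop::f_hat_n_1}), but in general they differ by the contribution of the indices where $C(x)$ and $C_n$ disagree.

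Then I would evaluate the gap between $f$ and $\hat{f}_n$ at the successor $\hat{x}_{n+1}$, whose active set by definition is $C_{n+1} = C(\hat{x}_{n+1})$. Splitting the two sums on the symmetric difference $C_{n+1} \triangle C_n$ gives
\begin{equation*}
f(\hat{x}_{n+1}) - \hat{f}_n(\hat{x}_{n+1}) = \sum_{i \in C_{n+1} \setminus C_n} \bigl(1 - d(\hat{x}_{n+1},x_i)\bigr) \;-\; \sum_{i \in C_n \setminus C_{n+1}} \bigl(1 - d(\hat{x}_{n+1},x_i)\bigr).
\end{equation*}
Each term in the first sum is \emph{strictly} positive, because $i \in C_{n+1}$ means $d(\hat{x}_{n+1},x_i) < 1$; each term in the second sum is non-positive, because $i \notin C_{n+1}$ means $d(\hat{x}_{n+1},x_i) \geq 1$, so subtracting it contributes $\geq 0$. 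Hence under the hypothesis $C_{n+1} \setminus C_n \neq \emptyset$ the whole right-hand side is strictly positive, i.e.\ $f(\hat{x}_{n+1}) > \hat{f}_n(\hat{x}_{n+1})$.

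Finally I would close the argument by chaining with the (non-strict) inequality $\hat{f}_n(\hat{x}_{n+1}) \geq \hat{f}_n(\hat{x}_n) = f(\hat{x}_n)$, which follows from Eq. \eqref{eq::iterate3} and Lemma \ref{prop::f_hat_n}\eqref{prop::f_hat_n_1}. The composition yields $f(\hat{x}_{n+1}) > f(\hat{x}_n)$, as wanted. Honestly I do not expect any real obstacle here; the only point requiring a small amount of care is to notice that strictness of the newly-added-point contribution (which comes from the \emph{strict} inequality $d(\hat{x}_{n+1}, x_i) < 1$ built into the definition of $C_{n+1}$) suffices on its own: the ``removed'' indices in $C_n \setminus C_{n+1}$ only need the non-strict bound $d(\hat{x}_{n+1},x_i) \geq 1$, which holds tautologically.
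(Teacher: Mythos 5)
Your proposal is correct and is essentially the paper's own argument: the paper splits the sum $\sum_{C_{n+1}\cup C_n}[1-d(\hat{x}_{n+1},x_i)]$ over the two decompositions $C_{n+1}\cup[C_n\setminus C_{n+1}]$ and $C_n\cup[C_{n+1}\setminus C_n]$, which is exactly your symmetric-difference computation of the gap $f(\hat{x}_{n+1})-\hat{f}_n(\hat{x}_{n+1})$, with the same four observations (strict positivity of the newly-added terms, non-positivity of the removed terms, the argmin property over $C_n$, and $\hat{f}_n(\hat{x}_n)=f(\hat{x}_n)$). Your phrasing of the key point --- that the lower bound $\hat{f}_n\leq f$ becomes strict at $\hat{x}_{n+1}$ precisely when a new index enters the active set --- is a slightly cleaner packaging of the same content.
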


\begin{proof}
%At eah step $n$ we define $u_{n+1} = \sum_{C_{n+1}\C_n} [1-d(\hat{x}_{n+1},x_i)]$ and $d_{n+1} = -\sum_{C_n\C_{n+1}} [1-d(\hat{x}_{n+1},x_i)]$.
The set $C_{n+1} \cup C_n$ can be cut as $C_{n+1} \cup \left[ C_n \setminus C_{n+1} \right]$ or as $C_n \cup \left[C_{n+1} \setminus C_n \right]$. Splitting the sum $\sum_{C_{n+1}\cup C_n} [1-d(\hat{x}_{n+1},x_i)]$ over these two decompositions of the set $C_{n+1} \cup C_n$ we obtain the equality:
$$
\underset{\boxed{1}}{\sum_{C_{n+1}} [1-d(\hat{x}_{n+1},x_i)]} + 
\underset{\boxed{2}}{\sum_{C_n \setminus C_{n+1}} [1-d(\hat{x}_{n+1},x_i)]} =
\underset{\boxed{3}}{\sum_{C_n} [1-d(\hat{x}_{n+1},x_i)]} +
\underset{\boxed{4}}{ \sum_{C_{n+1} \setminus C_n} [1-d(\hat{x}_{n+1},x_i)]}
$$
Let us study each term. We have the following results:
\begin{itemize}
\item $\boxed{1}=\hat{f}_{n+1}(\hat{x}_{n+1})$. \emph{Proof: by definition of $\hat{f}_{n+1}$.}
\item $\boxed{2} \leqslant 0$. \emph{Proof: by definition of $C_{n+1}$ we have $[1-d(\hat{x}_{n+1},x_i)]\leqslant 0$ for any $i \not \in C_{n+1}$.}
\item $\boxed{3} \geqslant \hat{f}_{n}(\hat{x}_{n})$. \emph{Proof: By definition, $\hat{x}_{n+1}$ minimizes $\sum_{C_n}d(x,x_i)$ over all possible values of $x$, so that we have in particular $\sum_{C_n} d(\hat{x}_{n+1},x_i) \leqslant \sum_{C_n} d(\hat{x}_n,x_i)$, thus $\sum_{C_n} [1-d(\hat{x}_{n+1},x_i)] \geqslant \sum_{C_n} [1-d(\hat{x}_n,x_i)] = \hat{f}_n(\hat{x}_n)$}
\item $\boxed{4} \geqslant 0$, with zero obtaind only if $C_{n+1} \setminus C_n$ is empty. \emph{Proof: by definition of $C_n$ we have $[1-d(\hat{x}_n,x_i)] > 0$ for any $i \in C_n$.}
\end{itemize}

In the case where $C_{n+1} \setminus C_n$ is empty we have $\boxed{4} > 0$ and $\hat{f}_{n+1}(\hat{x}_{n+1}) = \boxed{1} > \boxed{3} \geqslant \hat{f}_n(\hat{x}_n)$

\end{proof}

An immediate consequence of Proposition \ref{prop::strictly} is that if $f(\hat{x}_n)$ is non-increasing after a given step $p_0$ then no data point can be added to $C_n$ for $n>p_0$, i.e. we have $C_{n+1} \subset C_n$ for $n>p_0$. In particular, the cardinal of $C_n$ is non-increasing after rank $p_0$. But this cardinal is a positive integer, so it is necessarily stationary after a rank $p_1>0$. For $n \geqslant p_1 \geqslant p_0$ no point can be added to nor removed from $C_n$ and this set becomes stationary, in a finite number of steps. Remembering Eq. \eqref{eq::mu} we conclude that $\hat{x}_n$ is stationary and prooves of Theorem \ref{thm::main}.

The application which motivated the present work is clustering of histograms, for which the Wasserstein metric defined in Section \ref{sect::WMS} by Equation \eqref{eq::wasserstein_metric} is much more meaningful than the Euclidean distance. We will see that working with the Wasserstein metric can be reduced to working with the $L_1$-norm, so we first analyze this case in Section \ref{sect::Mean_Shift_L1}.

\subsection{Application 1: Median shift}
\label{sect::Mean_Shift_L1}

In this subsection, we present median shift, a L1 variant of mean shift. 
The algorithm was already presented in \citep{shapira2009mode}.
The authors did notice a lot of advantages of median shift upon mean shift and make interesting experiments, however, they do not demonstrate its convergence.

Using $d(x_1,x_2) = \norm{x_1-x_2}_1$ ($L_1$-norm) and the triangular kernel (defined by $K(u) = \max(1-u,0)$), the p.d.f. of Eq. \eqref{eq::f_general} becomes:
$$
f(x) = \sum_{i \in C(x)} \left( 1- \norm{x-x_i}_1 \right),
$$
where $\norm{\cdot}_1$ denotes the $L_1$-norm in $\RR^q$ and $C(x) = \{ i \in [1,N], \norm{x-x_i}_1 < 1 \}$ is the set of  indices $i$ for which we have $K(\norm{x-x_i}_1) \not= 0$. Eq. \eqref{eq::iterate3} defining the generalized mean shift iteration becomes:
\begin{equation}
\label{eq::MedShift}
\hat{x}_{n+1} = \text{Med}( \{x_i, i \in C(\hat{x}_{n}) \}),
\end{equation}
with $C_n = {i \in I, \norm{\hat{x}_n-x_i}_1<1}$, and $\text{Med}(.)$ the term by term median of a set of points, as the median minimizes the L1 norm. The general result of Sect. \ref{sect::Main} applies and ensures convergence of this ``median shift'' algorithm
\begin{prop}
\label{prop::L_1_convergence}
The sequence of mode estimates returned by Eq. \ref{eq::MedShift} is convergent (more specifically it is stationnary) for any initialization, as a direct application of theorem \ref{thm::main}.
\end{prop}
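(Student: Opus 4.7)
The plan is to verify that median shift fits exactly into the framework of Theorem \ref{thm::main} and then invoke that theorem. Since the proposition explicitly claims it as a direct application, the work amounts to a careful identification of the ingredients rather than new analysis.

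First, I would check that the setup of Section \ref{sect::Main} is satisfied. The distance chosen is $d(x,y)=\norm{x-y}_1$, which is a non-negative function going to $+\infty$ as $\norm{x}\to\infty$, and the profile $k(u)=\max(1-u,0)$ is the triangular profile whose derivative $g=-k'$ is the flat indicator of $[0,1)$. Hence $f$ as in Eq. \eqref{eq::f_general} takes the form written just before Eq. \eqref{eq::MedShift}, and the active set $C_n=\{i:\norm{\hat x_n-x_i}_1<1\}$ matches the definition used in Theorem \ref{thm::main}.

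Next, I would justify that the update Eq. \eqref{eq::MedShift} is a particular instance of the generic rule \eqref{eq::mu}. Specializing \eqref{eq::iterate3} to $d=\norm{\cdot}_1$ gives
\begin{equation*}
\hat x_{n+1}\in\underset{x}{\mathrm{argmin}}\sum_{i\in C_n}\norm{x-x_i}_1,
\end{equation*}
and the sum of $L_1$ distances to a finite collection of points in $\RR^q$ decouples coordinate by coordinate. The minimum in each coordinate is attained at (any) median of the corresponding coordinates of the $x_i$'s, so the componentwise median $\mathrm{Med}(\{x_i,\,i\in C_n\})$ belongs to the argmin. Picking this componentwise median deterministically supplies a valid choice function $\mu$ in the sense of Eq. \eqref{eq::mu}, so Eq. \eqref{eq::MedShift} is genuinely an instance of generalized mean shift with triangular profile.

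With these two identifications in place, Theorem \ref{thm::main} applies verbatim, yielding that $\hat x_n$ is stationary after finitely many steps regardless of the initialization, which is exactly the statement of Proposition \ref{prop::L_1_convergence}. The only mild subtlety to flag is the non-uniqueness of the $L_1$ minimizer when $|C_n|$ is even — this is precisely why the choice function $\mu$ was introduced in Section \ref{sect::Main}, and selecting the same componentwise median whenever the same active set reappears suffices to meet its requirements. No new estimates are needed, so I do not anticipate a genuine obstacle: the proof is essentially a one-line reduction once the correspondence is spelled out.
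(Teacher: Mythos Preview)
Your proposal is correct and matches the paper's own treatment: the proposition is stated there without a separate proof, as an immediate consequence of Theorem \ref{thm::main} once one notes that the componentwise median minimizes the sum of $L_1$ distances. Your added care about the choice function $\mu$ when $|C_n|$ is even is a welcome clarification, but the overall argument is the same one-line reduction the paper intends.
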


%
%%%%%%%%%%%% ALGO MEDIAN SHIFT %%%%%%%%%%%%%%%%%%%
%
%\begin{algorithm}[H]
%\caption{Median shift}
%\label{algo::MS_L1}
%\begin{algorithmic}
%\FOR{$j=1:N$}
%\STATE $\hat{x}_0 = x_j$.
%\WHILE{$\hat{x}_n+1 \neq \hat{x}_n$}
%\STATE 
%\label{eq::rrr}
%\hat{x}_{n+1} = \text{Med} (\{ x_i, i \in C(\hat{x}_n) \}).$
%\ENDWHILE
%\ENDFOR
%\end{algorithmic}
%\end{algorithm}

%\begin{rem}
%The property shown in step 2 in the proof of Proposition \ref{prop::L_1_convergence} may look convoluted as in most cases, the set $C(\hat{x}_n)$ stops changing as soon as $f(\hat{x}_n)$ reaches its final value, meaning we have $C(\hat{x}_{n+1})$. But here, $C(\hat{x}_n)$ being decreasing for the inclusion cannot be replaced with $C(\hat{x}_n)$ being constant. Indeed, adapting the proof would require the number $D$ to be a sum of strictly negative (instead of non-positive) terms, which is not true due to the possibility for some points of the dataset to end up on the boundary of $C(\hat{x}_n)$. The weaker statement of step 2 allows handling this situation.
%\end{rem}

\subsection{Application 2: Wasserstein median shift}
\label{sect::WMS}

The Wasserstein metric, also called the Earth Mover's Distance, is a distance function between probability distributions on a set $U$, linked to optimal transport theory. 
We focus here on the specific case where $U$ is a finite set of cardinal $q$, and the probability distributions are approximated by normalized histograms.
We do not enter here in the generalized form of Wasserstein metric, as it is not the main point of the paper.
The intuition is that this distance it takes into account the order of the histogram's bins, as the difference of levels between close bins as less impact than the difference on far bins. 

Under these hypotheses, we can define the Wasserstein distance between two histograms $M$ and $N \in \mathcal{H}^q$, where $\mathcal{H}^q \subset (\RR_{\geq 0})^q$ is the set of normalized histograms: $\mathcal{H}^q = \{ z \in (\RR_{\geq 0})^q, \sum_{k=1}^q z^k =1 \} $, but it can be better defined in the space of cumulated histograms: $\mathcal{C}^q = \{ z \in (\RR_{\geq 0})^q, z^q=1, z^{k+1} \geq z^k \} $. 

Let us introduce two operators linking histograms to cumulated histograms:
\begin{equation}
\text{cumul}(x)_k = \sum_{i=1}^k x_i \quad  \text{and} \quad \text{diff}(z)_k = z_n - z_{n-1}
\end{equation}
with the convention $z_{-1}=0$. We have of course:
$$
\text{cumul}^{-1} = \text{diff}
$$

The Wasserstein metric can then be shown to take the simpler form below:
\begin{equation}
\label{eq::wasserstein_metric}
W_{1}(M,N) = ||\text{cumul}(M)-\text{cumul}(N)||_{L1}.
\end{equation}

A natural way of obtaining a minimizer for Eq. \ref{eq::iterate3} is to compute it in the space of cumulated histograms, then come back to the space of normalized histograms using the function diff. All we need to check is that the term-by-term median of a set of cumulated histograms is still a cumulated histogram.
\begin{prop}
The term-by-term median of a set of cumulated histograms is still a cumulated histogram.
\end{prop}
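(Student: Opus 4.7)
The plan is to verify the three conditions defining $\mathcal{C}^q$ for the term-by-term median $m$ of cumulated histograms $z_1,\dots,z_N \in \mathcal{C}^q$, where $m^k = \mathrm{median}(z_1^k,\dots,z_N^k)$. Non-negativity ($m^k \geq 0$) is immediate since each $z_j^k \geq 0$ and the median of non-negative numbers is non-negative. The normalization $m^q = 1$ is also immediate: every $z_j^q = 1$, so $m^q = \mathrm{median}(1,\dots,1) = 1$. The only non-trivial condition to check is monotonicity: $m^{k+1} \geq m^k$ for all $k$.

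For monotonicity, I would invoke the general fact that the median operator is non-decreasing in each of its arguments, equivalently, that if two tuples $(a_1,\dots,a_N)$ and $(b_1,\dots,b_N)$ satisfy $a_j \leq b_j$ for all $j$, then $\mathrm{median}(a_1,\dots,a_N) \leq \mathrm{median}(b_1,\dots,b_N)$. Applying this with $a_j = z_j^k$ and $b_j = z_j^{k+1}$ — which satisfy $a_j \leq b_j$ precisely because each $z_j$ is itself a cumulated histogram — yields $m^k \leq m^{k+1}$ directly.

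The main (and essentially only) ingredient is therefore the monotonicity of the median. This is standard and easy to prove in a line via order statistics: if $a_j \leq b_j$ for every $j$, then for any $i$, the set $\{j : a_j \leq b_{(i)}\}$ contains $\{j : b_j \leq b_{(i)}\}$, which has at least $i$ elements; hence $a_{(i)} \leq b_{(i)}$, and in particular the middle order statistic (whichever convention is used for even $N$) preserves the inequality. I would state this as a short lemma or a one-line remark inside the proof. No real obstacle is expected, the whole proof should fit in under half a page.
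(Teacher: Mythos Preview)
Your proposal is correct and follows essentially the same route as the paper: verify non-negativity and $m^q=1$ directly, then deduce $m^{k+1}\geq m^k$ from the coordinatewise monotonicity of the median applied to $z_j^k\leq z_j^{k+1}$. The only addition on your side is the one-line order-statistics justification of that monotonicity, which the paper simply asserts.
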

\begin{proof}
Let ${z_i, i \in J}$ be a set of cumulated histograms and $\bar{z}$ its term-by-term median defined by $\bar{z}^k = Med_i(z_i^k)$ (coordinates are denoted with a superscript). For $\bar{z}$ to be a cumulated histogram we have to check $\bar{z}_K=1$, $\forall k, \bar{z}^k \geqslant 0$ and that $\bar{z}$ is growing. The two first properties are obvious ( the median of numbers all equals to one is one, the median of positive numbers is positive). To show the third required property we notice that the ``median'' function is growing w.r.t. each of its arguments. Thus, having $(z_i^k \leqslant z_i^{k+1})$ for each $i$ implies $\text{Med}(z^k) \leqslant \text{Med}(z^{k+1})$. This is all we needed to establish that the median of a set of cumulated histograms is still a cumulated histogram.
\end{proof}

\begin{prop}
\label{prop::WMS_optimization}
If the distance function $d(\cdot,\cdot)$ in the equation \eqref{eq::iterate3} is the Wasserstein metric $W_1(\cdot,\cdot)$ defined by Equation  \eqref{eq::wasserstein_metric}, then the optimization step in \eqref{eq::iterate3} is solved by: 
\end{prop}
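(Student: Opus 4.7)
The plan is to reduce the Wasserstein optimization to the $L_1$ minimization already handled by median shift (Sect.~\ref{sect::Mean_Shift_L1}), exploiting the fact that Eq.~\eqref{eq::wasserstein_metric} represents $W_1$ as an $L_1$ distance in the cumulated-histogram space. Concretely, I expect the proposition to assert
$$
\hat{x}_{n+1} = \text{diff}\Big( \text{Med}\big( \{\text{cumul}(x_i),\ i \in C_n\} \big) \Big),
$$
with $C_n = \{ i,\ W_1(\hat{x}_n, x_i) < 1 \}$, and the proof would make this identification rigorous.

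First, I would substitute Eq.~\eqref{eq::wasserstein_metric} into the objective of Eq.~\eqref{eq::iterate3}, yielding
$$
\hat{x}_{n+1} \in \underset{x \in \mathcal{H}^q}{\text{argmin}}\ \sum_{i \in C_n} \|\text{cumul}(x) - \text{cumul}(x_i)\|_{L_1}.
$$
Performing the change of variable $z = \text{cumul}(x)$, $z_i = \text{cumul}(x_i)$, the problem becomes the constrained $L_1$ problem
$$
\bar{z} \in \underset{z \in \mathcal{C}^q}{\text{argmin}}\ \sum_{i \in C_n} \|z - z_i\|_{L_1},
$$
and $\hat{x}_{n+1}$ is recovered via $\hat{x}_{n+1} = \text{diff}(\bar{z})$, using that $\text{cumul}$ is a bijection between $\mathcal{H}^q$ and $\mathcal{C}^q$ with inverse $\text{diff}$.

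Second, I would drop the constraint. Because the $L_1$ norm is separable across coordinates, the unconstrained minimizer over all of $\RR^q$ is exactly the term-by-term median $\text{Med}(\{z_i,\ i\in C_n\})$ (the same fact underlying median shift in Sect.~\ref{sect::Mean_Shift_L1}). The preceding proposition states that the term-by-term median of cumulated histograms is itself a cumulated histogram, so this unconstrained minimizer already belongs to $\mathcal{C}^q$. Hence the constraint $z \in \mathcal{C}^q$ is inactive and the constrained and unconstrained minimizers coincide, giving $\bar{z} = \text{Med}(\{\text{cumul}(x_i),\ i \in C_n\})$ and therefore the claimed formula after applying $\text{diff}$.

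The main obstacle is precisely the step of confirming that the constraint $z\in\mathcal{C}^q$ does not bite; without it one would have to solve a non-trivial constrained $L_1$ projection. This is handled entirely by invoking the previous proposition, so the argument reduces to a clean concatenation of (i) the change of variable via $\text{cumul}/\text{diff}$, (ii) the separability of $L_1$ which identifies the median as the unconstrained minimizer, and (iii) the stability of $\mathcal{C}^q$ under coordinatewise median. A minor point worth verifying in passing is that the definition of $C_n$ transports correctly under the change of variable, which is immediate because $W_1(\hat{x}_n, x_i) = \|\text{cumul}(\hat{x}_n) - \text{cumul}(x_i)\|_{L_1}$ by \eqref{eq::wasserstein_metric}.
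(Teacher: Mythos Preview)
Your proposal is correct and follows precisely the paper's approach: the paper argues (in the sentence preceding the proposition on medians of cumulated histograms) that one should pass to the cumulated-histogram space via $\text{cumul}$, solve the resulting $L_1$ problem there by the term-by-term median, and return via $\text{diff}$, the only thing to check being that the median of cumulated histograms remains a cumulated histogram --- which is exactly your steps (i)--(iii). Your write-up is in fact more explicit than the paper's, which gives no separate proof environment for this proposition and relies on the preceding discussion and proposition as the justification.
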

\begin{equation}
\label{eq::WMS_CD}
\hat{x}_{n+1} = \text{diff} ( \text{Med}( \{ \text{cumul}(x_i), i \in [1,N], \norm{ \text{cumul}(\hat{x}_n) - \text{cumul}(x_i)}_1  <1 \} )).
\end{equation}

An equivalent, but more practical formulation of the algorithm is the following:
\begin{equation}
\hat{x}_{n+1} = \text{diff} (\hat{z}_{n+1})
\quad  \text{with} \quad
\label{eq::unconstrained}
\hat{z}_{n+1} =  \text{Med}( \{ z_i, i \in C_n \} )
\end{equation}
where $C_n= \{ i \in I, \norm{\hat{z}_n - z_i}_1 <1 \}$. Proposition \ref{prop::WMS_optimization} provides a closed-form formula for the optimization problem appearing in Eq. \eqref{eq::iterate3} when used with the Wasserstein metric. It makes the adaptation of Mean Shift tractable in practice, but also suggests directly working on the cumulative histograms, which gives Algorithm \ref{algo::WMS} (Wasserstein median shift). Following Proposition \ref{prop::L_1_convergence}, the obtained estimate $\hat{x}_n$ is again stationary.

\begin{algorithm}
\caption{Wasserstein median shift}
\label{algo::WMS}
\begin{algorithmic}
\STATE Compute $z_i = \text{cumul}(x_i)$ for each $i$.
\FOR{$j=1:N$}
\STATE
$\hat{z}_0 = z_j$.
\WHILE{$\hat{x}_{n+1} \neq \hat{x}_n$}
\STATE
$\hat{z}_{n+1} = \text{Med} (\{ z_i ,  \left| \left| z_i - \hat{z}_n \right| \right|_ 1 <1 \}).$
\STATE
$\hat{x}_{n+1} = \text{diff}(\hat{z}_{n+1})$.
\ENDWHILE
\ENDFOR
\end{algorithmic}
\end{algorithm}

\section{Experiment results}
\label{sect::Results}

In this section, we present results on two datasets, one on synthetic data, with a pure illustrative purpose, and one on real aeronautical data. 
We compare our wasserstein median shift algorithm with classical mean shift and other classical algorithms of clustering.
We do not claim that our algorithm works better in all cases, (a property which cannot be true for any clustering algorithm), but we show here some cases where it can outperfom classical methods.
The correlation between the obtained results and the ground truth is assessed using the Adjusted Rand Index (ARI \citep{rand1971objective}).
An ARI of 1 means that two clusterings are identical, an ARI close to 0 (or negative) expresses a total dissimilarity between the two clusterings. 

\subsection{Datasets}

%\subsubsection{Synthetic data}
To illustrate the flaws of the $L_2$ distance for histogram clustering, we build a data set consisting of empirical histograms, each computed using $100$ samples of a random variable.
The law of the random variable is different for each histogram, but belongs to one of the two following classes.
\begin{description}
\item[Class 1:] Gaussian variables having similar means and the same variance.
\item[Class 2:] Mixtures of two Gaussian variables, one of them chosen as in class 1.
\end{description}
For Class 1, the Gaussian have their means ranging from 0.47 to 0.53. For Class 2, the two components of the mixture have weights of 0.8 and 0.2 respectively. Their means range from 0.47 to 0.53 and from 0.17 to 0.23 respectively. The standard deviation is $0.02$ for all Gaussian.
For example, samples of these two classes of histograms are displayed on Figure \ref{fig::histograms_examples}, and, looking at it, retrieving the two classes is \emph{a priori} an easy task.

\begin{figure}
\center
\includegraphics[width=12.0cm]{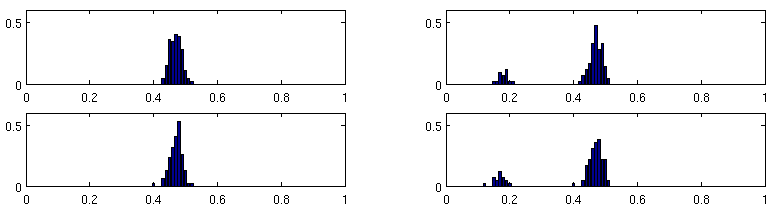}
\caption{Two samples of the two classes of histograms generated for the experiment described in Section \ref{sect::Results}. Left: class 1 ; right : class 2. We see their difference is obvious. Yet, the classical mean shift based on the $L_2$-norm performs very poorly on this data set, contrary to the Wasserstein median shift.}
\label{fig::histograms_examples}
\end{figure}

%\subsubsection{Real trajectory data}

The second situation we considered is the application framework which motivated the research presented in the paper. A fleet of helicopters is used by a company for several kinds of missions, and we look for a clustering algorithm able to retrieve the various use cases from in-flight recorded data. Instead of using long and complexe descriptors on temporal data, we used histograms, easier to compute on large sets of data, and more understandable.
The dataset we used, provided by an helicopter company, contains 122 time series representing the evolution, during a mission, of the altitude of a helicopter. They cannot be directly compared as vectors because their lengths are different. Note that the general issue of comparing time series having different sizes is difficult in itself, and several responses such as Dynamic Time Warping have been brought in the past. The solution we chose here is comparing the series through their histograms.

\begin{figure}
\center
\includegraphics[width=12cm]{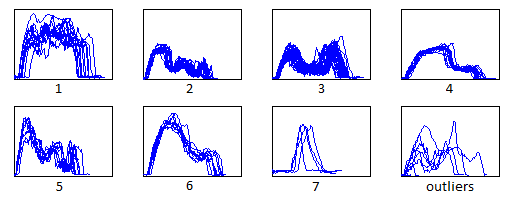}
\caption{True clusters of the dataset, and outliers in the last square.}
\label{fig::trajdata}
\end{figure}

\subsection{Experiments}
We tested 8 techniques from scikit-learn: Mini Batch K-means (MBKM), Affinity Propagation (AP), Spectral Clustering (SC), Agglomerative Clustering with Ward (ACW), Agglomerative Clustering with average linkage (ACA), DBSCAN (DB), Birch (Bir), and Gaussian Mixture (GM) and compare to Wasserstein Median Shift (ours, WMS) and Mean Shift (MeaS). Some of them being stochastic, we launched them 100 times and gave the standard deviation of the ARI. For all methods, when a parameter should be given, we gave to the algorithm reasonnable value (for example, we gave exact number of clusters for synthetic data). The results are given in the Table \ref{tab::results1}.
For both datasets, Wasserstein Median Shift outperforms all other clustering algorithms. We can also see that some algorithms perform better on real than synthetic data. This is due to our synthetic dataset being specifically designed to show the $L2$-norm can be tricked with mere Gaussian laws. The only exception is DBSCAN, which works for synthetic data and could not work on real ones (we tried with several $\epsilon$ value, but none worked). This result could be explain by the fact that in our real data there are samples linking clusters, making difficult for DBSCAN to separate them. 

\begin{table}[h!]
\label{tab::results1}
\begin{tabular}{|c|c|c| c|c|c| c|c|c| c|c|}
\hline
Data			& WMS 		  	& MeaS 	 &  MBKM    	  & AP 		& SC 	   	& ACW   & ACA 	  		& DB  	& 	Bir 	& GM \\
%\hline
%Synthetic (122)		& $\emph{1.0}$	& $0.33$ & $0.01\pm0.01 $ & $0.25$	& $0.11$	& $0.0$  & $0.0$  & $0.44$	& $0.0$		& $0.05\pm0.16$\\
\hline
Synth	& $\emph{1.0}$	& $0.11$ & $0.02\pm0.05 $ & $0.15$	& $0.0$		& $0.08$ & $0.0$  & $0.68$	& $0.0$		& $0.10\pm0.23$\\
\hline	
Real 			& $\emph{0.78}$	& $0.15$ & $0.38\pm0.04 $ & $0.49$	& $0.51$	& $0.67$ & $0.29$ & $0.0$		& $0.67$	& $0.43\pm0.06$\\
\hline
\end{tabular}
\caption{Results on real and synthetic dataset.}
\end{table}

We also went further and wrote a ``Wasserstein version'' of the algorithms for which it could be easily done: K-means-Wasserstein (KMWS) and DBSCAN-Wasserstein (DBSCAN WS).
All results are presented in the table \ref{tab::results2}
We can see that the Wasserstein versions of the algorithms work better on synthetic and real data than their $L_2$ counterparts.
Nevertheless, modified algorithms are still outperformed by Wasserstein Median Shift, which hypothesis fit better to the dataset.
Our understanding of the difficulties encountered with $L_2$-norm for histogram clustering is that the $L2$-norm is invariant by histogram bins permutations while the order of the bins is of great significance for histograms. 

\begin{table}
\label{tab::results2}
\begin{center}
\begin{tabular}{|c|c|c| c|c|c| }
\hline
Method 				& WMS 		  	& 	KMWS 		& DBSCAN WS 		\\
%\hline
%Synthetic (122)		& $\emph{1.0}$	& $1.0\pm0.0$	& $1.00$ 		\\
\hline
Synthetic	& $\emph{1.0}$	& $0.98\pm0.01$	& $0.99$ 		\\
\hline	
Real Data			& $\emph{0.78}$	& $0.51\pm0.07$ & $0.44$ 		\\
\hline
\end{tabular}
\caption{Results with only Wasserstein algorithms.}
\end{center}
\end{table}

\section{Conclusion and perspectives}

In this work, after a quick review of the results on mean shift theory, we introduced a new class of algorithm with convergence properties. We proposed two applications : the median shift algorithm introduced in \cite{shapira2009mode}  to which we give a convergence proof, and a novel algorithm : the Wasserstein median shift. We also provide some experiments to show the usefullness of the latter.
In further works, we intend to look at extensions of the algorithms, to obtain clustering with different bandwith regarding the samples, and the possibility to extend to other profile than the triangular profile.

\bibliographystyle{plain}

\end{document}

% --- supplement: supplementary.tex ---

\title{Median shift: a L1-wasserstein clustering algorithm, supplementary materials}

\maketitle

We illustrate here by more results the median shift algorithm. We performed two more experiments: one on synthetic data, with a pure illustrative purpose, and one on real aeronautical data. 

\begin{figure}
\center
\includegraphics[width=12cm]{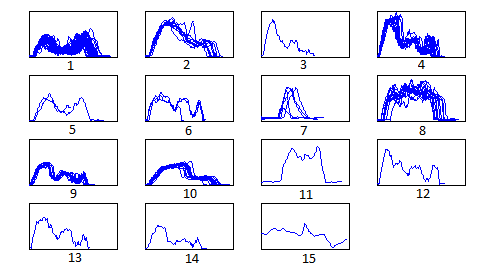}
\includegraphics[width=12cm]{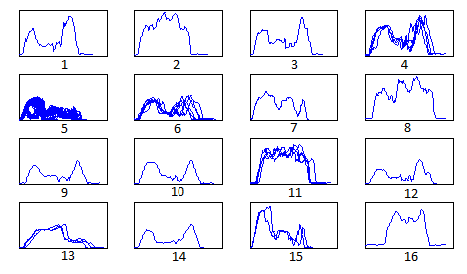}
\caption{Clusters returned by two algorithms. Top: Wasserstein median shift. We see that cluster 1,2,3,4,5,6,7 of Figure \ref{fig::trajdata} have been correctly recovered, respectively, as cluster 8, 9, 1, 10, 4, 2 and 7. The outliers have caused the apparition of eight additional clusters: 3, 5, 6, 11, 2, 3, 14, 15. Bottom: classical mean shift. Only clusters 1 and 4 of Figure \ref{fig::trajdata} seem partially recovered, and cluster 5, containing almost all the histograms,has been created.}
\label{fig::trajdata_res}
\end{figure}

\begin{table}
\center
\begin{tabular}{|c|c|c|}
\hline
Method 			& 	Adjusted Rand Index (Simulated)	    & Adjusted Rand Index (Real)\\
\hline
Affinity Propagation 	& 		0.24					& 0.49				  \\
\hline
Spectral Clustering 	& 		0.11					& 0.44				  \\
\hline
Ward 	& 							0.0					& 0.41				  \\
\hline
Agglomerative Clustering 	&		0.0					& 0.0				  \\
\hline
DBSCAN 			& 	0.0									& 0.0				  \\
\hline
Gaussian Mixture&	0.97								& 0.41				  \\
\hline
kmeans L2 		& 	0.0									& 0.41				  \\
\hline
Birch 		& 	0.0										& 0.41				  \\
\hline
kmeans WS 		&										& 0.74				  \\
\hline
mean shift L2 	& 	0.3315 								& 0.15				  \\
\hline
median shift 	&	1.0									& 0.78 				  \\

\hline
\end{tabular}
\caption{Results for the simulated dataset (left) and the trajectory dataset (right).}
\label{tab::tab1}
\end{table}
We worked on public climate data from the American National Oceanic and Atmospheric Administration \footnote{The data we used are available at \url{https://www.ncdc.noaa.gov/monitoring-references/maps/us-climate-divisions.php}}. For each state (except Alaska and Hawaii for which data are missing), we built the histogram of rainfall and of temperatures over the years as a feature. Then, we used these features to look for groups of states having similar behavior. We tested classical mean shift and our Wasserstein median median shift. We cannot compute the ARI indicator because there is no ground truth in this case,  but we can look at the obtained results and qualitatively assess their relevance with respect to the different climates present in the USA. The results of a clustering based on temperatures are given on Figure \ref{fig::rainfalls}. We see that Wasserstein median shift, as expected, isolates California and splits the north and south part of the east coast, which is not achieved by classical mean shift.

\begin{figure}
\center
\begin{tabular}{cc}
\includegraphics[width=5.0cm]{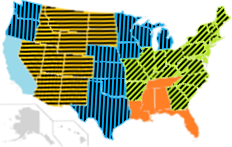} & 
\includegraphics[width=5.0cm]{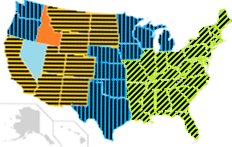} \\
Wasserstein median shift & Mean shift
\end{tabular}
\caption{Climate in the USA, clustered applying two algorithms to rainfall histograms of the different states. Left: Wasserstein median shift ; right: mean shift.}
\label{fig::rainfalls}
\end{figure}
%
On Figure \ref{fig::temperatures} the clustering is based on temperatures. The results are not as clear as with rainfalls, both methods returning horizontal slices more due to global temperature increase from north to south than to complicated histogram shape variations.

\begin{figure}
\center
\begin{tabular}{cc}
\includegraphics[width=5.0cm]{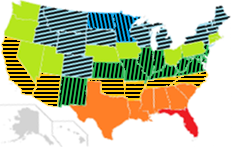} & 
\includegraphics[width=5.0cm]{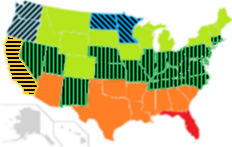} \\
Wasserstein median shift & Mean shift
\end{tabular}
\caption{Climate in the USA, clustered applying two algorithms to temperature histograms of the different states. Left: Wasserstein median shift ; right: mean shift.}
\label{fig::temperatures}
\end{figure}

 The histograms obtained for the seven categories of missions are displayed on Figure \ref{fig::trajdata}. We see that classical mean shift is not adapted to the problem: it gathers almost all flights in cluster 5. Conversely, all the true clusters are among those returned by the Wasserstein median shift. The ARI obtained for the mean shift is 0.15, as the ARI of our algorithm is 0.78. Even if the result is not perfect, our algorithm clearly outperform the classical mean shift in this practical case.

\bibliography{mainbib}
\bibliographystyle{plain}